\pgfplotsset{compat=newest}
\theoremstyle{plain}
\theoremstyle{definition}
\theoremstyle{remark}
\newcommand{\ie}{\mbox{\it i.e.}}
\newcommand{\wrt}{\mbox{\it w.r.t.}}
\newcommand{\viz}{\mbox{\it viz.}}
\newcommand{\shortname}{\textsc{BRAIn}}
\newcommand{\longname}{\textbf{B}ayesian \textbf{R}eward-conditioned \textbf{A}mortized \textbf{In}ference}
\newcommand{\bigprob}{\mathbb{P}}
\newcommand{\yset}{\mathcal{Y}}
\newcommand{\KL}{D_{KL} }
\newcommand{\KLnorm}{D_{n}^{q'}}
\newcommand{\inp}{x}
\newcommand{\outp}{y}
\newcommand{\outpp}{{\outp}}
\newcommand{\outpvec}{\mathbf{y}}
\newcommand{\outppvec}{{\mathbf{y}}}
\newcommand{\rewardrv}{G}
\newcommand{\goodness}{g}
\newcommand{\ppolicy}{q_{\theta}}
\newcommand{\pposterior}{p}
\newcommand{\pproposal}{q'}
\newcommand{\entropy}{\mathcal{H}}
\newcommand{\dposft}{DPO-sft}
\newcommand{\alphahatyx}{\hat{\alpha}_{Y_x}}
\newcommand{\betahatthetayx}{\hat{\beta}_{Y_x}}
\newcommand{\alphay}[1]{\alpha_{\outp_{#1}}}
\newcommand{\alphahaty}[1]{\hat{\alpha}_{\outp_{#1}}}
\newcommand{\betathetay}[1]{\beta_{\outp_{#1}}}
\newcommand{\betahatthetay}[1]{\hat{\beta}_{\outp_{#1}}}
\newcommand{\rewardfn}[1]{r(\inp,{#1})}
\newcommand{\prior}[1]{p({#1}|\inp)}
\newcommand{\posterior}[1]{p({#1} | \inp, \rewardrv=1)}
\newcommand{\policy}[1]{q_{\theta}({#1} | \inp)}
\newcommand{\proposal}[1]{q'({#1} | \inp)}
\newcommand{\optimal}[1]{p^{*}({#1} | \inp)}
\newcommand{\rewardprob}[1]{p(\rewardrv=1|{#1},\inp)}
\newcommand{\rewardprobx}{p(\rewardrv=1| \inp)}
\newcommand{\expectation}{\mathbb{E}}
\newcommand{\loss}{\mathcal{L}_x(\theta)}
\newcommand{\lossestimate}{\hat{\mathcal{L}}_x(\theta)}
\newcommand{\multiexpectation}{\mathbb{E}_{\substack{\outp_i \sim \proposal{\outp} \\ 1 \leq i \leq n}}}
\newcommand{\multiexpectationdpo}{\mathbb{E}_{\substack{\outp_1, \outp_2 \sim \prior{\outp} }}}
\newcommand{\runningtitle}{Bayesian Reward-conditioned Amortized I\textsc{n}ference}
\newcommand{\maintitle}{\shortname: Bayesian Reward-conditioned Amortized \textsc{In}ference \\ for natural language generation from feedback}
\icmltitlerunning{\runningtitle}
\begin{document}

\twocolumn[
\icmltitle{\maintitle}



\icmlsetsymbol{equal}{*}

\begin{icmlauthorlist}
\icmlauthor{Gaurav Pandey}{comp1}
\icmlauthor{Yatin Nandwani}{comp1}
\icmlauthor{Tahira Naseem}{comp1}
\icmlauthor{Mayank Mishra}{comp1}
\icmlauthor{Guangxuan Xu}{comp1}
\icmlauthor{Dinesh Raghu}{comp1}
\icmlauthor{Sachindra Joshi}{comp1}
\icmlauthor{Asim Munawar}{comp1}
\icmlauthor{Ramón Fernandez Astudillo}{comp1}
\end{icmlauthorlist}

\icmlaffiliation{comp1}{IBM Research AI}


\icmlcorrespondingauthor{Gaurav Pandey}{gpandey1@in.ibm.com}

\icmlkeywords{Machine Learning, ICML}

\vskip 0.3in
]



\printAffiliationsAndNotice{}  


\begin{abstract}

Distribution matching methods for language model alignment such as Generation with Distributional Control (GDC) and Distributional Policy Gradient (DPG) have not received the same level of attention in reinforcement learning from human feedback (RLHF) as contrastive methods such as Sequence Likelihood Calibration (SLiC), Direct Preference Optimization (DPO) and its variants. We identify high variance of the gradient estimate as the primary reason for the lack of success of these methods and propose a self-normalized baseline to reduce the variance. We further generalize the target distribution in DPG, GDC and DPO by using Bayes' rule to define the reward-conditioned posterior. The resulting approach, referred to as \shortname\ - \longname\ acts as a bridge between distribution matching methods and DPO and significantly outperforms prior art in summarization and Antropic HH tasks.
\end{abstract}

\section{Introduction}
\label{sec:intro}

Reinforcement Learning from Human Feedback (RLHF) has emerged as a pivotal technique to fine-tune Large Language Models (LLMs) into conversational agents that obey pre-defined human preferences \cite{ouyang2022training, bai2022training, openai2023gpt4, touvron2023llama}. This process involves collecting a dataset of human preferences and using it to align a Supervised Fine-Tuned (SFT) model to human preferences.

Proximal Policy Optimization (PPO) RLHF \cite{ziegler2019pporlhf}, has been instrumental in the development of groundbreaking models such as GPT-3.5 \cite{ouyang2022training,ye2023comprehensive} and GPT-4 \cite{openai2023gpt4} among others. This RL technique trains a separate Reward Model (RM) to discriminate outputs based on human preferences.
The RM is then used to train a policy that maximizes the expected reward while regularizing it to not diverge too much from the SFT model. 

Despite its clear success, PPO has lately been displaced by offline contrastive techniques that are more scalable and do not make full use of a separate RM.  
Techniques like Likelihood Calibration (SLiC) \cite{zhao2023slic} or Rank Responses to align Human Feedback (RRHF) \cite{yuan2023rrhf} only keep ranking information from a RM. Direct Preference Optimization (DPO) \cite{rafailov2023direct}, which is currently the de-facto method used to align high-performing models such as Zephyr \cite{tunstall2023zephyr}, Mixtral \cite{jiang2024mixtral} or LLaMa-3\footnotemark\footnotetext{\url{https://ai.meta.com/blog/meta-llama-3/}}, trains the policy directly on human preferences without the need of a separate reward model\footnotemark\footnotetext{Arguably, the more performant versions of DPO still depend on a RM for determining preference data \cite{liu2023rso}.}.

Both PPO and DPO are derived from KL-controlled reward maximization \cite{jaques2017tuning}, which has a well known closed form solution \cite{levine2018reinforcement}. This optimal policy comes however in the form of an energy-based model (EBM) with an intractable partition function. A set of less well-known methods for RL in language models use distribution matching to align an SFT model to this EBM~\cite{khalifa2020gdc, parshakova2019distributional, korbak2022distributionmatching}. 
During the alignment of an SFT model via distribution matching, we need to sample from the target EBM. However, sampling from the target EBM is challenging, and hence distribution matching approaches sample from a proposal distribution instead, and reweigh the samples based on their importance weights. Despite the clear intuition behind distribution matching, these methods are not used commonly for the task of reinforcement learning from human feedback.


In this work, we address the primary reason for the lack of success of distribution matching methods for RLHF. Towards this end, we propose \shortname - \longname\, that extends the distribution-matching methods in two significant ways. Firstly, we propose a Bayesian approach to construct the target distribution for distribution matching. Specifically, we treat the SFT model as the prior distribution over the outputs for a given input, that is, $p(y|x)$. The likelihood $p(G=1|x,y)$ captures the goodness of an output for a given input and is defined as a function of the reward $r(x,y)$ based on the reward-modelling assumptions. The resulting reward-conditioned posterior $p(y|x,G=1)$, obtained using Bayes' rule, is chosen as the target distribution. When the underlying preference model behind the reward function $r(x,y)$ follows Bradley-Terry assumptions, we show that the posterior corresponds to the optimal policy of the KL-regularized reward-maximization objective used in PPO-RLHF~\cite{ziegler2019pporlhf}.

More significantly, we observe that the distribution matching technique suffers from high-variance of the gradient estimate, despite the baseline proposed in~\citet{korbak2022distributionmatching}. In this work, we propose a self-normalized baseline that significantly reduces the variance of the gradient estimate. We prove that the resulting estimate corresponds to the gradient of a novel self-normalized KL divergence objective for distribution matching. Furthermore, self-normalization in the baseline helps us to establish DPO-sft (a version of DPO where samples are generated from the SFT model and scored by the reward model) as a special case of the \shortname\ objective. 

In our experiments on TL;DR summarization \cite{stiennon2020learning,volske2017tldr} and AntropicHH \cite{bai2022training},
\shortname\ establishes a new state-of-the-art by significantly outperforming existing SoTA RL methods DPO and RSO~\cite{rafailov2023direct}.
In addition, we bridge the gap between \shortname\ and DPO by careful augmentation of DPO objective in two ways. Specifically, we incorporate:
1) multiple outputs for a given input prompt, and 2) reward as an importance weight in the DPO objective. 

Overall, we make the following contributions:



\begin{compactenum}
\item We propose a Bayesian approach to construct the target distribution in distribution matching methods of RL for LLMs. The resulting posterior generalizes the PPO optimal policy.
\item For distilling the posterior in our training policy, we propose a self-normalized baseline for variance reduction of the gradient estimate of the objective. The resulting algorithm is referred to as \shortname\ and the gradient estimate is referred to as the \shortname\ gradient estimate.
\item We theoretically prove that the proposed gradient estimate is an unbiased estimator of a modified form of KL divergence that we name \shortname~objective.
\item We derive the exact form of the \shortname~objective under Bradley-Terry preference model assumption. We also show DPO can be derived as a special case of the \shortname~objective.
\item Finally, we empirically substantiate our claims by experimenting on two natural language generation tasks.
\end{compactenum}

\section{Related Works}
In relation to RLHF approaches, InstructGPT \cite{ouyang2022training} made fundamental contributions to conversational agent alignment and showed how Proximal Policy Optimization (PPO) \cite{schulman2017proximal} could be used for this purpose. PPO is however an online-RL algorithm, which carries high costs of sampling and keeping additional LLMs in memory, such as value networks.

After PPO, offline-RL algorithms emerged that are simpler and have lower computational costs. SLiC \cite{zhao2023slic} proposed a margin loss between preferred and rejected outputs, regularized with a SFT loss, RRHF \cite{yuan2023rrhf} extends this idea to multiple outputs. Direct Preference Optimization (DPO) \cite{rafailov2023direct} starts from the KL-controlled reward maximization objective as PPO and derives an analytical form for a reward model based on the optimal policy for this objective. Once plugged into the standard parametrization of a Bradley-Terry reward model, this yields an objective without an explicit reward and results in a contrastive gradient update rule. DPO is well funded theoretically and has found clear empirical success in aligning LLMs \cite{tunstall2023zephyr,ivison2023camels,jiang2024mixtral}. It has also a large amount of follow-up works. Statistical Rejection Sampling Optimization (RSO) \cite{liu2023rso} proposes sampling from the optimal distribution and use a reward model for labeling, Identity Preference Optimization (IPO) \cite{azar2024general} introduces additional regularization, Kahneman-Tversky Optimization (KTO) \cite{ethayarajh2024kto} derives a similar loss that does not require preference pairs and Odds Ratio Preference Optimization (ORPO) \cite{hong2024reference} enriches the SFT loss with a term based on the odds of producing the desired response.

As mentioned in Section~\ref{sec:intro}, PPO and DPO originate from the same KL-controlled reward maximization which has an EBM as its optimal policy solution. Distributional Policy Gradient (DPG) \cite{parshakova2019distributional} proposes using importance sampling to learn a policy matching the distribution of an EBM via KL minimization. DPG notes that, for stability purposes, offline optimization is needed. Generation with Distributional Control (GDC) \cite{khalifa2020distributional} proposes the application of DPG to controlled language generation, introduces a KL threshold to update of the offline proposal distribution and makes a connection with maximum entropy methods. GDC++ \cite{korbak2022distributionmatching} shows that, similarly to regular policy gradient, variance reduction increases performance. It also shows that distributional matching of the optimal distribution optimizes the same objective as KL-controlled reward maximization.

Similar to the above approaches, \shortname\ uses distribution matching to train the policy. However, it differs from GDC and GDC++ in two major aspects. Firstly, the target distribution in \shortname\ is the reward-conditioned posterior derived using Bayes' rule. Secondly, and more importantly, \shortname\ uses a self-normalized baseline which results in significant variance reduction of the gradient estimate and hence, it tremendously improves performance. The self-normalized baseline yields a connection to DPO~\cite{rafailov2023direct}, showing that DPO-sft (a variant of DPO where the samples come from base/SFT policy and are scored using a reward model) is a special case of \shortname.

Related to learning distributions conditioned on desired features, earlier works such as \citet{ficler-goldberg-2017-controlling}, train special feature embeddings to produce text with desired target features. More recently \citet{chen2021decision} conditions on a goodness token, and
\cite{lu2022quark,korbak2023pretraining} threshold a reward model for the same purpose. 
Although inspired by reward conditioning, \shortname~deviates from these approaches by not explicitly parametrizing the conditional distribution that takes both prompt $\inp$ and desired reward as input. Instead, \shortname\ poses the problem as distributional matching of the posterior distribution.

\section{Notation}

Let $\inp$ be an input prompt and $\yset$ be the set of all output sequences.
Let $\prior{\outp}$ be the conditional probability assigned by a supervised fine-tuned (SFT) language model (LM) to an output $\outp \in \yset$ for an input $\inp$. 
Let $\rewardfn{\outp}$ be the corresponding reward value assigned by a given reward function.
Further, let $\rewardrv$ represent a binary random variable 
such that the probability
$\rewardprob{\outp}$ captures the goodness of a given output $\outp$ for a given input $\inp$.
The relationship between probability $\rewardprob{\outp}$ and reward value $\rewardfn{\outp}$ depends upon the modelling assumptions made while training the reward model. In \cref{sec:btm}, we illustrate the connection between $\rewardprob{\outp}$ and $\rewardfn{\outp}$
for both absolute and relative reward models such as Bradley-Terry.

Given the prior $\prior{\outp}$, and the goodness model $\rewardprob{\outp}$, 
we define the posterior $\posterior{\outp}$ as our desired distribution that assigns high probability to `good' outputs.
To mimic sampling from the posterior distribution, we aim to train a new model $\policy{\outp}$ by minimizing the KL divergence between the two.


\section{Approach}

\textbf{Bayesian reformulation:}
We first use Bayes' rule to represent the reward--conditioned posterior as:
\begin{equation}\label{eq:posterior}
    \posterior{\outp} =  \frac{\prior{\outp} \rewardprob{\outp}}{\rewardprobx}
\end{equation}
We are interested in sampling from $\posterior{\outp}$, \ie, samples with high reward. An obvious solution is to sample from $\prior{\outp}$ and keep rejecting the samples until a sample with a high reward is obtained.
Despite its simplicity, rejection sampling is expensive. Hence, in this work, we propose to learn a distribution, $\policy{\outp}$, that can mimic sampling from the posterior without the computation overhead of rejection sampling.

\textbf{Training objective:} To train the parameters $\theta$, we propose to minimize the KL--divergence between the posterior distribution $\posterior{\outp}$, and $\policy{\outp}$. 
This is equivalent to maximizing the objective:
\begin{align}
    \loss &= - \expectation_{\posterior{\outp}} \left[ \log \frac{\posterior{\outp}}{\policy{\outp}} \right]
\end{align}
By collecting all the constant terms with respect to $\theta$ in $C$, the objective can be written as
\begin{align}\label{eq:objective}
    \loss &=  \expectation_{\posterior{\outp}} \left[ \log \policy{\outp} \right] +C\,
\end{align}

Henceforth, the constant $C$ will be omitted in subsequent formulations of the objective $\loss$.

\textbf{Approximation with importance sampling:}\label{sec:approx_is}
To empirically compute the expectation in \cref{eq:objective}, we need to sample from the posterior $\posterior{\outp}$. Unfortunately, it is not possible to do so directly, and hence we resort to importance sampling \cite{tokdar2010importance},
\begin{align}\label{eq:iw_objective}
    \loss &=  \expectation_{\proposal{\outp}} \left[ \frac{\posterior{\outp_i}}{\proposal{\outp_i}} \log \policy{\outp} \right]\,
\end{align}
where $\proposal{\outp}$ is an an easy--to--sample proposal distribution. Taking into account the Bayes rule in equation \eqref{eq:posterior}  we approximate the expectation by a sample average of $n$ outputs $(\outp_1, \ldots \outp_n)$ from $\proposal{\outp}$. We further use self--normalized importance sampling (ch. 9 in \citet{owen2013monte}), normalizing the weights by their sum. This results in the following loss for a given $\inp$:
\begin{align}\label{eq:approx_is}
&\lossestimate = \sum_{i=1}^n \alphahaty{i} \log \policy{\outp_i} \text{, where } \outp_i \sim \proposal{\outp} \\
&\alphahaty{i} = \frac{\alphay{i}}{\sum_{j=1}^n \alphay{j}} \text { , } \alphay{i} = \frac{\prior{y_i}}{\proposal{y_i}} \rewardprob{\outp_i} \label{eq:alpha}
\end{align}
Note that we dropped $\rewardprobx$ from $\alphay{i}$ as it will get cancelled due to self--normalization. We have added subscripts to $\alpha$ to show that they depend on the samples $\outp$.
The gradient of the objective can be written as
\begin{equation} \label{eq:gradient_is}
    \nabla_\theta \lossestimate = \sum_{i=1}^n \alphahaty{i} \nabla_\theta \log \policy{\outp_i}
\end{equation}
\textbf{Baseline subtraction:} One critical issue with the loss in \cref{eq:approx_is} is that it assigns a positive weight to all the samples $\outp_i$ for a given $\inp$, irrespective of its reward distribution $\rewardprob{\outp_i}$. In other words, the model is trained to increase the probability of all the samples and not just the high-reward ones. This is not an issue when all the samples have high reward, that is, the proposal distribution is the same as the posterior $\proposal{\outp} = \posterior{\outp}$.  

When the proposal is away from the posterior, we hypothesize that it is crucial for low-reward samples to have negative weights. In GDC++ \cite{korbak2022distributionmatching}, the authors proposed to subtract the following baseline from its gradient estimate:
\begin{equation}
    Baseline = Z\frac{\policy{\outp}}{\proposal{\outp}} \nabla_\theta \log \policy{\outp}\,
\end{equation}
where $Z$ is the normalization constant of the target EBM (target posterior in this paper). In contrast, we propose a self-normalized baseline as described below:


To obtain a baseline for our case, we first note the following general result (derived for $\policy{\outp}$ here),
\begin{align} \label{eq:baseline}
     & \expectation_{\policy{\outp}}  \nabla_\theta \log \policy{\outp} 
      = \expectation_{\policy{\outp}}  \frac{1}{\policy{\outp}}\nabla_\theta \policy{\outp} \\
     & = \sum_{\outp \in \yset} \nabla_\theta \policy{\outp} 
      = \nabla_\theta \sum_{\outp \in \yset} \policy{\outp} = \nabla_\theta 1 = 0
\end{align}
Thus, this expectation can be subtracted from the gradient in \eqref{eq:gradient_is} without introducing any bias. 
To estimate the baseline, we reuse the same samples $(\outp_1, \ldots, \outp_n)$ that are used in \eqref{eq:approx_is} and apply self--normalized importance sampling to get:
\begin{align} \label{eq:baseline_is}
&\expectation_{\policy{\outp}} \nabla_\theta \log \policy{\outp} \approx \sum_{i=1}^n \betahatthetay{i} \nabla_\theta \log \policy{\outp_i} \\
&\outp_i \sim \proposal{y} \text{ and } \betahatthetay{i} = \frac{\betathetay{i}}{\sum_{j=1}^n \betathetay{j}} \text{ , } \betathetay{i} = \frac{\policy{y_i}}{\proposal{y_i}}
\label{eq:beta}
\end{align}
Subtracting the baseline estimate from \cref{eq:gradient_is}, we get:
\begin{align}\label{eq:grad_brain}
    \nabla_\theta \lossestimate  
    = \sum_{i=1}^n \left(\alphahaty{i} - \betahatthetay{i}\right) \nabla_\theta \log \policy{y_i}
\end{align}
$\outp_i, \alphahaty{i},\text{and } \betahatthetay{i}$ are same as in \cref{eq:alpha,eq:beta}. 
We call the self-normalized baseline subtracted gradient estimate in \eqref{eq:grad_brain} the \textbf{\shortname\ gradient estimate}.
Intuitively, $\alphay{i}$ and $\betathetay{i}$ are proportional to the posterior $\posterior{\outp_i}$ and policy $\policy{\outp_i}$ distributions, respectively.
Thus, the weight (difference of normalized $\alphay{i}$ and $\betathetay{i}$) of each $\outp_i$  captures how far the current estimate of policy  $\policy{\outp_i}$ is from the true posterior $\posterior{\outp_i}$.
This also alleviates the critical issue of assigning positive weights to all $\outp_i$ irrespective of their reward.
Samples with lower reward, and hence lower posterior $\posterior{\outp_i}$ (consequently lower $\alphay{i}$), will get negative weights as soon as the policy distribution $\policy{\outp_i}$ assigns higher probability to them (consequently higher $\betathetay{i}$) than the posterior, and vice-versa.

\begin{algorithm}
\caption{\small Bayesian Reward-conditioned Amortized Inference}
\label{alg:brain}
\begin{algorithmic}[1]
\REQUIRE $r(x,y), p(y|x), D, e, m, n, k$
\STATE Initialize $q'(y|x) \leftarrow p(y|x)$, $q_\theta(y|x) \leftarrow p(y|x)$
\STATE Initialize $D_0 \leftarrow \{\}$
\FOR{$t = 1$ to $e$}
    \STATE $S_t \leftarrow$ randomly selected $m$ prompts from $D$
    \FORALL{$x \in S_t$}
        \STATE $Y_x \leftarrow$ generate $n$ samples using $q'(y|x)$ 
        \FORALL{$y \in Y_x$}
            \STATE Cache $\log q'(y|x), \log p(y|x), r(x,y)$ in $S_t$
            \STATE Cache normalized $\alphahaty{}$ (\cref{eq:alpha} or \cref{eq:imp_weights}) in $S_t$
        \ENDFOR
    \ENDFOR
    \STATE $D_t \leftarrow D_{t-1} \cup S_t$
    \FOR{$j = 1$ to $k$}
        \STATE $B \leftarrow $ randomly sampled batch from $D_t$
        \FORALL{$(x, Y_x, q'(y|x), \alphahaty{}) \in B$}
            \STATE Compute $\betahatthetay{}$ (\cref{eq:beta})
            \STATE $\loss \leftarrow \sum_{y \in Y_x} (\hat{\alpha}_y - \hat{\beta}_y) \log \policy{y}$
        \ENDFOR
        \STATE Update $\theta$ using $\nabla_\theta\sum_{x\in B}\loss$
    \ENDFOR
    \STATE $\proposal{y} \leftarrow \policy{y}$
\ENDFOR
\STATE \textbf{Return} $\ppolicy$

\end{algorithmic}
\end{algorithm}
The \shortname\ algorithm with the self-normalized baseline is given in \Cref{alg:brain}. 
We initialize both our proposal $\proposal{y}$ and policy $\policy{y}$ with $\prior{y}$. 
We update the proposal $e$ times after every $k$ gradient updates.
$m$ is the number of prompts sampled from dataset $D$ after updating the proposal, and $n$ is the number of outputs generated for each prompt $x \in D$.

In our experiments, we show that subtracting the self--normalized baseline results in a large improvement in performance. We hypothesize that the baseline allows the model to focus on distinctive features of high-reward outputs compared to the lower-reward ones.
A formal justification of the resultant \shortname~gradient estimate is provided in the Appendix~\ref{sec:snkl}.

\subsection{A formal justification of the \shortname\ gradient estimate:}\label{sec:snkl}
Self-normalized importance sampling (SNIS) introduces bias in any estimator. In this paper, we have used SNIS to approximate the importance weights as well as the baseline in our gradient estimate. Using biased gradient estimators for training often results in optimization of an objective different from the desired one.

However, in our case, we show that using the \shortname\ gradient estimate for training, results in minimizing
a self-normalized version of KL--divergence (defined below) whose minimum value is achieved only when $\policy{\outp} = \posterior{\outp}$. Note that this is a consequence of the chosen self-normalized baseline in \eqref{eq:beta}.

First, we define self-normalized KL divergence in the context of this paper. Next, we show that our proposed \shortname\ gradient estimate is an unbiased estimator of the gradient of this divergence measure. 
Finally, we prove that this self-normalized KL divergence is non-negative and equals $0$ only when the policy learns to mimic the posterior. The proofs are in \cref{sec:proofs}.
\begin{restatable}{definition}{selfkl}
\label{def:self_kl}
Let the proposal distribution $\proposal{\outp}$, training policy $\policy{\outp}$ and posterior $\posterior{\outp}$ be as defined earlier. Furthermore, we assume that $\text{support}(\pposterior) \subseteq \text{support}(\ppolicy) \subseteq \text{support}(\pproposal)$. For any $n$ outputs, $Y_x=(\outpp_1, \ldots, \outpp_n)$, let 
$\alphahaty{i}$ and $\betahatthetay{i}$ be the self--normalized importance sampling weights for the loss and the baseline, respectively (\cref{eq:alpha,eq:beta}). 
The self-normalized KL--divergence between the posterior and training policy for the given proposal distribution is defined as:
\begin{align}
    &\KLnorm  ( \posterior{\outp}  || \policy{\outp} ) \notag \\
    &= \multiexpectation \left[ \KL( \alphahatyx || \betahatthetayx ) \right] \text{ where } \label{eq:snkl} \\
   &\KL(\alphahatyx || \betahatthetayx) = \sum_{i=1}^n \alphahaty{i}\log \frac{\alphahaty{i}}{\betahatthetay{i}} \label{eq:alphabetakl}
\end{align}
\end{restatable}

\begin{restatable}{theorem}{unbiasedestimator}
\label{th:unbiased_estimator}
    The \shortname\ gradient estimate defined in \eqref{eq:grad_brain} is an unbiased estimator of the gradient (\wrt\ $\theta$) of negative self-normalized KL--divergence between the posterior $\posterior{\outp}$ and training policy $\policy{\outp}$ defined in \eqref{eq:snkl}. Here, the dependence of KL divergence on $\theta$ comes from $\betahatthetay{i}$ being a function of $\policy{\outp_i}$.
\end{restatable}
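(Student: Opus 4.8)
The plan is to establish the stronger, sample-wise identity
\[
\sum_{i=1}^n \big(\alphahaty{i} - \betahatthetay{i}\big)\,\nabla_\theta\log\policy{\outp_i} \;=\; -\,\nabla_\theta\,\KL(\alphahatyx \,||\, \betahatthetayx)
\]
for \emph{every} realization $Y_x = (\outp_1,\dots,\outp_n)$ of the $n$ samples drawn from $\proposal{\outp}$, and then to take $\multiexpectation[\cdot]$ of both sides and interchange it with $\nabla_\theta$. The left-hand side then becomes $\multiexpectation[\nabla_\theta\lossestimate]$ --- the \shortname\ gradient estimate of \eqref{eq:grad_brain} --- while the right-hand side becomes $-\nabla_\theta\KLnorm(\posterior{\outp}||\policy{\outp})$ by \eqref{eq:snkl}, which is exactly the claim.

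To prove the sample-wise identity I would first observe that $\alphay{i}$, hence $\alphahaty{i}$, depends only on $\prior{\outp_i}$, $\proposal{\outp_i}$ and $\rewardprob{\outp_i}$ (\eqref{eq:alpha}) and therefore carries no $\theta$-dependence; differentiating \eqref{eq:alphabetakl} thus gives $\nabla_\theta\KL(\alphahatyx||\betahatthetayx) = -\sum_{i=1}^n\alphahaty{i}\,\nabla_\theta\log\betahatthetay{i}$. Next I would write $\log\betahatthetay{i} = \log\betathetay{i} - \log\sum_{j=1}^n\betathetay{j}$, and use that $\betathetay{i} = \policy{\outp_i}/\proposal{\outp_i}$ with $\proposal{\cdot}$ free of $\theta$, so $\nabla_\theta\log\betathetay{i} = \nabla_\theta\log\policy{\outp_i}$; applying the log-derivative identity to the normalizer, $\nabla_\theta\log\sum_j\betathetay{j} = \sum_j\betahatthetay{j}\,\nabla_\theta\log\policy{\outp_j}$. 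Substituting these back, the $i$-sum splits into $\sum_i\alphahaty{i}\nabla_\theta\log\policy{\outp_i}$ plus the term $-\big(\sum_i\alphahaty{i}\big)\sum_j\betahatthetay{j}\nabla_\theta\log\policy{\outp_j}$, and since $\sum_{i=1}^n\alphahaty{i} = 1$ this is exactly $-\sum_i(\alphahaty{i}-\betahatthetay{i})\nabla_\theta\log\policy{\outp_i}$, establishing the identity.

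What remains, and what I expect to be the only genuinely delicate point, is justifying the interchange of $\nabla_\theta$ with $\multiexpectation[\cdot]$: the expectation is a (possibly countably infinite) sum over $\yset^n$ against the $\theta$-independent law $\proposal{\outp}$, and the support hypotheses $\text{support}(\pposterior)\subseteq\text{support}(\ppolicy)\subseteq\text{support}(\pproposal)$ of \cref{def:self_kl} keep every $\alphahaty{i}$ and $\betahatthetay{i}$ strictly positive and finite, so under the standard local-domination/integrability assumption on $\nabla_\theta\log\policy{\outp}$ used in policy-gradient arguments, dominated convergence licenses the swap. The remaining ingredients --- the self-normalization bookkeeping, the log-derivative trick, and the cancellation via $\sum_i\alphahaty{i}=1$ --- are routine. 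I would also note that, since the first display holds pointwise in $Y_x$, the result is in fact an almost-sure equality of random gradients and not merely an equality of expectations.
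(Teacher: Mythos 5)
Your proposal is correct and follows essentially the same route as the paper's proof: drop the $\theta$-free entropy/$\hat\alpha$ terms, split $\log\betahatthetay{i}$ into $\log\betathetay{i}-\log\sum_j\betathetay{j}$, apply the log-derivative trick to the normalizer, and cancel via $\sum_i\alphahaty{i}=1$. Your added observations — that the identity holds pointwise in $Y_x$ (so the equality is almost sure, not just in expectation) and that the expectation/gradient interchange needs a domination argument — are correct refinements the paper leaves implicit.
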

Since the gradient of our objective in \cref{eq:grad_brain} is the same as the gradient of negative self-normalized KL-divergence defined in \cref{eq:snkl}, in the rest of the paper, we refer to negative self-normalized KL-divergence between the posterior and the training policy as \textbf{\shortname\ objective}.
\begin{restatable}{theorem}{sameminima}
\label{th:same_minima}
The self-normalized KL--divergence defined in \eqref{eq:snkl} reaches its minimum value of $0$ if and only if the KL--divergence between the posterior and training policy also reaches $0$. 
Also, $\policy{\outp}=\posterior{\outp}$ is the only minima of self-normalized KL--divergence defined in \eqref{eq:snkl}.
\end{restatable}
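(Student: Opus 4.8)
The plan is to first establish that $\KLnorm(\pposterior || \ppolicy) \ge 0$ always, with value $0$ attained precisely when $\alphahatyx = \betahatthetayx$ for every sampled tuple, and then to translate that tuple-wise condition into the single identity $\policy{\outp} = \posterior{\outp}$. For non-negativity: for any fixed $Y_x = (\outpp_1,\dots,\outpp_n)$, the weights $(\alphahaty{1},\dots,\alphahaty{n})$ and $(\betahatthetay{1},\dots,\betahatthetay{n})$ are non-negative and sum to $1$ by the self-normalization in \eqref{eq:alpha} and \eqref{eq:beta}, hence are probability vectors on $\{1,\dots,n\}$, so $\KL(\alphahatyx || \betahatthetayx) \ge 0$ by Gibbs' inequality; its expectation $\KLnorm(\pposterior || \ppolicy)$ is therefore $\ge 0$ and $0$ is a valid lower bound. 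The ``if'' direction is then quick: if $\KL(\pposterior||\ppolicy) = 0$ then $\policy{\outp} = \posterior{\outp}$, and by Bayes' rule \eqref{eq:posterior}, $\alpha_{\outp} = \tfrac{\prior{\outp}}{\proposal{\outp}}\rewardprob{\outp} = \rewardprobx \tfrac{\posterior{\outp}}{\proposal{\outp}} = \rewardprobx\,\betatheta{\outp}$ for every $\outp$; thus $\alpha$ and $\beta$ differ only by the $\outp$-independent factor $\rewardprobx$, which cancels under self-normalization, so $\alphahaty{i} = \betahatthetay{i}$ for every tuple, every inner KL term vanishes, and $\KLnorm(\pposterior||\ppolicy) = 0$.

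For the ``only if'' direction, suppose $\KLnorm(\pposterior||\ppolicy) = 0$. Since $\yset$ is discrete and $\proposal{\outp} > 0$ on its support, an expectation of the non-negative integrand in \eqref{eq:snkl} can vanish only if $\KL(\alphahatyx || \betahatthetayx) = 0$ for every $Y_x$ with all coordinates in $\mathrm{supp}(\pproposal)$, i.e. $\alphahaty{i} = \betahatthetay{i}$ for all $i$ in every such tuple. Rewriting this as $\alphay{i}/\betathetay{i} = \big(\sum_j \alphay{j}\big)/\big(\sum_j \betathetay{j}\big)$ shows the ratio $\alphay{i}/\betathetay{i}$ is the same for all $i$ within a tuple. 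Since $n \ge 2$, I can place any two outputs $\outp, \outp' \in \mathrm{supp}(\ppolicy)$ in one tuple (padding the remaining slots with arbitrary support elements), obtaining $\alpha_{\outp}/\betatheta{\outp} = \alpha_{\outp'}/\betatheta{\outp'}$; hence this ratio equals a constant $c > 0$ over $\mathrm{supp}(\ppolicy)$, i.e. $\alpha_{\outp} = c\,\betatheta{\outp}$ there. Using $\mathrm{supp}(\pposterior) \subseteq \mathrm{supp}(\ppolicy) \subseteq \mathrm{supp}(\pproposal)$, both sides vanish off $\mathrm{supp}(\ppolicy)$, so $\alpha_{\outp} = c\,\betatheta{\outp}$ for all $\outp$. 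Substituting \eqref{eq:alpha} and \eqref{eq:beta} and multiplying through by $\proposal{\outp}$ gives $\rewardprobx\,\posterior{\outp} = c\,\policy{\outp}$ for all $\outp$; summing over $\outp \in \yset$ forces $c = \rewardprobx$, hence $\policy{\outp} = \posterior{\outp}$ and $\KL(\pposterior||\ppolicy) = 0$. The same chain shows $\policy{\outp}=\posterior{\outp}$ is the unique minimizer.

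I expect the one nontrivial step to be the ``only if'' argument — promoting the family of tuple-wise equalities $\alphahatyx = \betahatthetayx$ to the pointwise proportionality $\alpha_{\outp} \propto \betatheta{\outp}$ — and this is exactly where $n \ge 2$ is needed (for $n = 1$ both normalized weights are identically $1$, so $\KLnorm \equiv 0$ and the statement is vacuous). A minor care point is that $\betatheta{\outp}$ can be $0$ for some $\outp \in \mathrm{supp}(\pproposal) \setminus \mathrm{supp}(\ppolicy)$, so the ratio argument must be run only over $\mathrm{supp}(\ppolicy)$ and then extended via the support inclusions; one should similarly note that $\alpha_{\outp} = 0$ exactly on the complement of $\mathrm{supp}(\pposterior)$ so the summation step is legitimate. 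Everything else — non-negativity of the inner KL, cancellation of $\rewardprobx$ under self-normalization, and the determination of $c$ — is routine.
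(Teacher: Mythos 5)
Your proof is correct, and it shares the paper's overall skeleton (non-negativity of each inner $\KL(\alphahatyx||\betahatthetayx)$ via Gibbs' inequality, and the forward direction via $\alphay{i}=\rewardprobx\,\betathetay{i}$ with the constant cancelling under self-normalization), but the ``only if'' direction is argued by a genuinely different route. The paper proves the contrapositive: assuming $\KL\neq 0$ it picks a single witness pair $\outp_+,\outp_-$ with $\posterior{\outp_+}>\policy{\outp_+}$ and $\posterior{\outp_-}<\policy{\outp_-}$, builds the one tuple $(\outp_+,\outp_-,\dots,\outp_-)$, and shows its first normalized $\alpha$-weight strictly exceeds the first normalized $\beta$-weight, so that single inner KL term is strictly positive and the expectation cannot vanish. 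You instead argue directly: vanishing of every inner term forces $\alphay{i}/\betathetay{i}$ to be constant within each tuple, and by running pairs of outputs through a common tuple you promote this to a global proportionality $\alpha_{\outp}=c\,\betatheta{\outp}$ on $\mathrm{supp}(\ppolicy)$, then pin down $c=\rewardprobx$ by summing. The paper's witness construction is more economical for showing ``$\KLnorm$ does not vanish,'' while your argument characterizes the entire zero set of $\KLnorm$ in one pass and so delivers the uniqueness claim ($\policy{\outp}=\posterior{\outp}$ is the only minimum) as an immediate corollary rather than as a by-product. Your explicit observations that $n\ge 2$ is needed (for $n=1$ the statement is vacuous since $\KLnorm\equiv 0$) and that the ratio argument must be restricted to $\mathrm{supp}(\ppolicy)$ before being extended via the support inclusions are both correct and are points the paper's proof uses implicitly (its witness tuple also has $n\ge 2$ and repeated entries) without stating.
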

\section{Connection with existing RLHF methods}
In this section, we show that the proposed \shortname\ algorithm acts as a bridge between two disparate objectives used for reinforcement learning in LLMs as shown in Figure~\ref{fig:connections}.
\begin{enumerate}
    \item Distribution matching objectives as described in \cite{korbak2022distributionmatching,parshakova2019distributional,khalifa2020gdc}.
    \item The contrastive-training approach presented in DPO~\cite{rafailov2023direct}, specifically DPO-sft where the samples come from the base/SFT policy and are scored using the reward model.
\end{enumerate}

\begin{figure*}[t]
    \centering
    \includegraphics[width=0.8\textwidth]{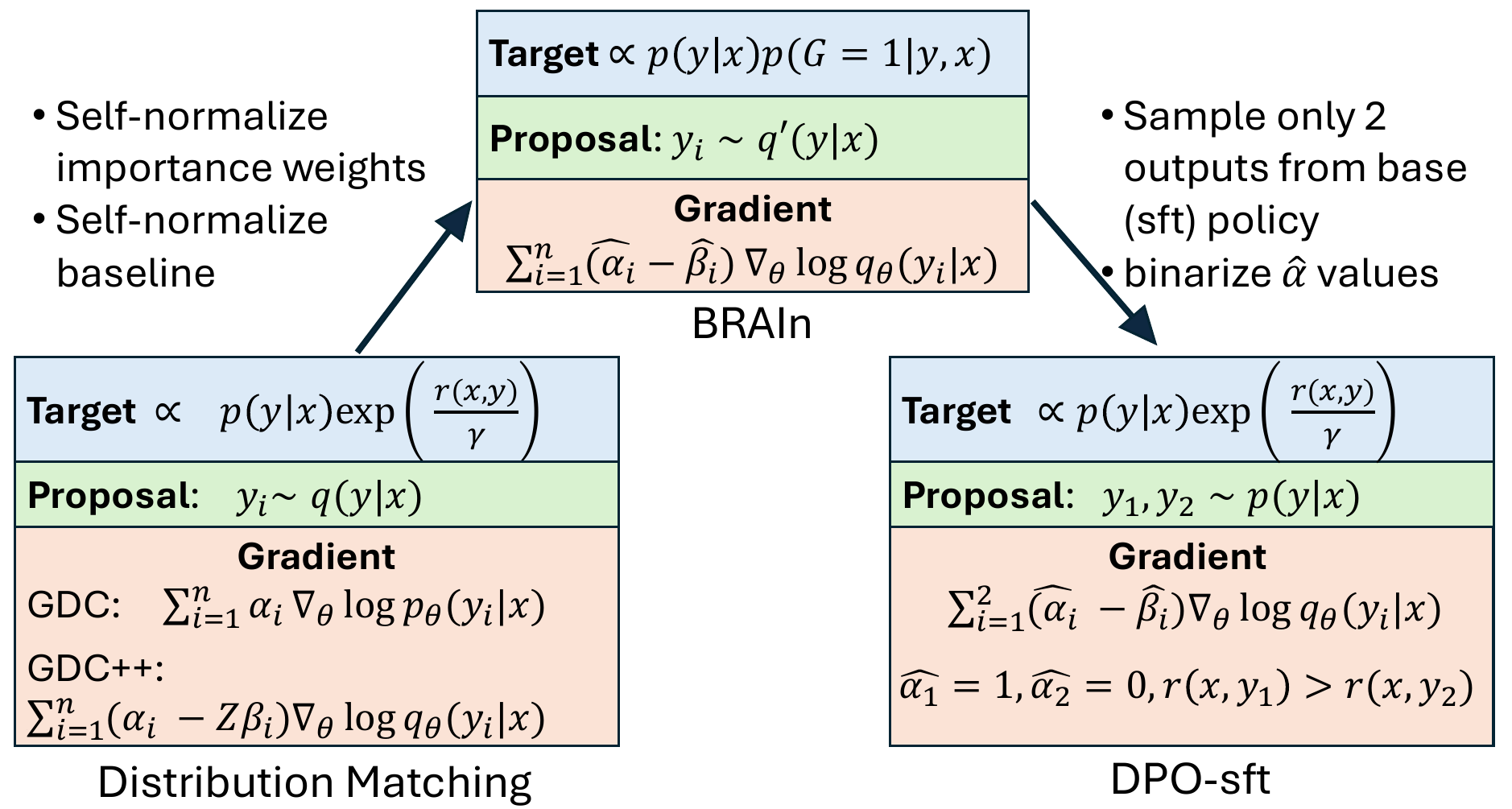}
    \caption{\shortname\ acts as a bridge between distribution matching methods (GDC~\cite{khalifa2020gdc} and GDC++~\cite{korbak2022distributionmatching}) and DPO~\cite{rafailov2023direct}, specifically DPO-sft where the samples come from the base/SFT policy. The values $\alpha_i, \hat{\alpha_i}$ and $\hat{\beta_i}$ are as defined in equations \eqref{eq:alpha} and \eqref{eq:beta} whereas $Z$ is the normalization constant of the target. Note that the proposal distribution in the distribution matching methods and \shortname\ is chosen differently.}
    \label{fig:connections}
\end{figure*} 

In the rest of the section:
\begin{enumerate}
    \item We establish the connection between the target distribution of \shortname\ as defined in \eqref{eq:posterior}, and the PPO-optimal policy which is the target distribution for PPO, DPO as well as distribution matching methods for RLHF.  Specifically, we establish that under Bradley-Terry preference modelling assumptions, the posterior becomes equal to the PPO-optimal policy. 
    \item We derive the DPO-sft gradient estimate as a special case of the \shortname\ gradient estimate.
\end{enumerate}

\subsection{Posterior and PPO-optimal policy}
In this section, we show that for a Bradley-Terry preference model, the posterior $\posterior{\outp}$ corresponds to the PPO-optimal policy. Before we jump into the proof, we briefly describe Bradley-Terry preference model and its connection with the goodness model in \shortname\ $\rewardprob{\outp}$.


\subsubsection{Bradley-Terry Preference Model for LLMs}\label{sec:btm}
Given an absolute goodness measure $\goodness_i$ for each $\outp_i$, Bradley-Terry Model (BTM) defines the probability of choosing $\outp_i$ over $\outp_j$ as:
\begin{equation} \label{eq:btm_orig}
    \bigprob(\outp_i \succ \outp_j) = \frac{\goodness_i}{\goodness_i + \goodness_j}
\end{equation} 
Recall that in our formulation, the binary random variable $\rewardrv$ represents the goodness of an output, and hence, the conditional probability $\rewardprob{\outp_i}$ can be used as a proxy for $\goodness_i$:
\begin{equation}\label{eq:btm}
    \bigprob(\outp_i \succ \outp_j|\inp) = \frac{\rewardprob{\outp_i}}{\rewardprob{\outp_i} + \rewardprob{\outp_j}}
\end{equation}
During training of the reward function $\rewardfn{\outp}$, we are given triplets $(\inp, \outp_i, \outp_j)$ such that for an input $\inp$, the response $\outp_i$ is preferred over the response $\outp_j$. 
We train it by maximizing the log-likelihood over all the training triplets.
During training, the goodness measure $g_i$ is parameterized as $\exp{(\frac{\rewardfn{\outp_i}}{\gamma})}$ resulting in the following log-likelihood of a triplet:
\begin{equation}\label{eq:btm_param}
    \log \bigprob(\outp_i \succ \outp_j| \inp) = \log \sigma\left(\frac{\rewardfn{\outp_i} - \rewardfn{\outp_j}}{\gamma}\right)
\end{equation}
Thus, by equating \cref{eq:btm_param} and log of \cref{eq:btm}, we get the maximum likelihood estimate of the log--odds as:
\begin{equation}\label{eq:btm_equate}
    \log \frac{\rewardprob{\outp_i}}{\rewardprob{\outp_j}} =  \frac{\rewardfn{\outp_i} - \rewardfn{\outp_j}}{\gamma}
\end{equation} 
The above ratio is exactly what we need to compute self--normalized importance weights $\alphay{i}/\sum_{j=1}^{n}\alphay{j}$ in \cref{eq:approx_is}.
We formalize this result in the proposition below.
\begin{restatable}{proposition}{bradleyterry}
\label{th:bradley_terry}
    For the Bradley-Terry preference model defined in \cref{eq:btm} and parameterized by \cref{eq:btm_param}, the self--normalized importance-weights $\alphahaty{i} = \alphay{i}/\sum_{j=1}^{n}\alphay{j}$ have the following form:
    \begin{align} \label{eq:imp_weights}
     \frac{\exp(\frac{\rewardfn{\outp_i}}{\gamma} + \log \prior{\outp_i} - \log \proposal{\outp_i})}{\sum_{j=1}^n \exp(\frac{\rewardfn{\outp_j}}{\gamma} + \log \prior{\outp_j} - \log \proposal{\outp_j})} 
    \end{align}
    In the special case where the proposal distribution $\proposal{\outp}$ is the same as the prior distribution $\prior{\outp}$, the importance weights reduces to
    \begin{equation}\label{eq:imp_weighted_sp}
        \alphahaty{i} =  \frac{\exp \left( \frac{\rewardfn{\outp_i}}{\gamma} \right)}{\sum_{j=1}^n \exp \left( \frac{\rewardfn{\outp_j}}{\gamma} \right)}
    \end{equation}
\end{restatable}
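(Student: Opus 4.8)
The plan is to start from the definition of the self-normalized importance weights in \eqref{eq:alpha}, namely $\alphay{i} = \frac{\prior{\outp_i}}{\proposal{\outp_i}} \rewardprob{\outp_i}$, and substitute the Bradley-Terry parametrization. The key observation is that self-normalization means we only need $\alphay{i}$ up to a multiplicative constant that does not depend on $i$; any such constant cancels in the ratio $\alphay{i}/\sum_j \alphay{j}$. So rather than pinning down $\rewardprob{\outp_i}$ exactly, it suffices to determine it up to an $i$-independent factor.

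First I would fix an arbitrary reference index, say $j=1$, and use \eqref{eq:btm_equate} to write $\rewardprob{\outp_i} = \rewardprob{\outp_1} \cdot \exp\!\big(\frac{\rewardfn{\outp_i} - \rewardfn{\outp_1}}{\gamma}\big) = \big(\rewardprob{\outp_1}\exp(-\tfrac{\rewardfn{\outp_1}}{\gamma})\big)\cdot \exp(\tfrac{\rewardfn{\outp_i}}{\gamma})$. The bracketed quantity is a constant independent of $i$, so $\rewardprob{\outp_i} \propto \exp(\tfrac{\rewardfn{\outp_i}}{\gamma})$. Plugging this into $\alphay{i}$ gives $\alphay{i} \propto \exp(\tfrac{\rewardfn{\outp_i}}{\gamma}) \frac{\prior{\outp_i}}{\proposal{\outp_i}}$, and then writing $\frac{\prior{\outp_i}}{\proposal{\outp_i}} = \exp(\log\prior{\outp_i} - \log\proposal{\outp_i})$ and combining exponents yields $\alphay{i} \propto \exp\!\big(\tfrac{\rewardfn{\outp_i}}{\gamma} + \log\prior{\outp_i} - \log\proposal{\outp_i}\big)$. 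Normalizing by the sum over $j$ gives exactly \eqref{eq:imp_weights}, and the unknown constant cancels cleanly.

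For the special case, I would simply set $\proposal{\outp} = \prior{\outp}$, so that $\log\prior{\outp_i} - \log\proposal{\outp_i} = 0$ for every $i$, and \eqref{eq:imp_weights} collapses to \eqref{eq:imp_weighted_sp}. This is immediate once the general form is established.

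I do not anticipate a serious obstacle here; the only subtlety worth stating carefully is that the Bradley-Terry relation \eqref{eq:btm_equate} determines $\rewardprob{\outp}$ only up to a normalization-style constant (it is a statement about ratios / log-odds differences), and one must be explicit that this is harmless precisely because $\alphahaty{i}$ is self-normalized. I would also note in passing that the support assumption $\text{support}(\pposterior) \subseteq \text{support}(\pproposal)$ ensures the ratios $\prior{\outp_i}/\proposal{\outp_i}$ are well defined on the samples drawn from $\proposal{\outp}$, so no division-by-zero issues arise.
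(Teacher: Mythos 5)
Your proposal is correct and follows essentially the same route as the paper: the paper's proof also starts from $\alphay{i} = \frac{\prior{\outp_i}}{\proposal{\outp_i}}\rewardprob{\outp_i}$ (after cancelling $\rewardprobx$ by self-normalization), divides numerator and denominator by $\rewardprob{\outp_i}$, and applies the log-odds relation \eqref{eq:btm_equate} to the resulting ratios $\rewardprob{\outp_j}/\rewardprob{\outp_i}$ — which is the same cancellation you perform by factoring out the $i$-independent constant $\rewardprob{\outp_1}\exp(-\rewardfn{\outp_1}/\gamma)$. Your explicit remark that \eqref{eq:btm_equate} only determines $\rewardprob{\outp}$ up to a multiplicative constant, and that this is harmless precisely because of self-normalization, is the right point to emphasize and is implicit in the paper's algebra.
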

See the appendix for the proof.
Observe that $\alphahaty{i}$ in \cref{eq:imp_weighted_sp} is nothing but a softmax over the reward values $\rewardfn{\outp_i}$.
In the next section, we show that DPO is a special case of our formulation when we replace softmax with argmax and set $n=2$.
This amounts to assigning all the importance weight to the output sample with the most reward.

\begin{restatable}{theorem}{optimalpolicy}
For a Bradley-Terry reward model, the posterior $\posterior{y}$ is same as the PPO optimal policy given by:
\begin{equation}
\optimal{y} = \frac{\prior{\outp}\exp(\frac{\rewardfn{\outp}}{\gamma})}{\left(\sum_{\bar{\outp} \in \yset}\prior{\bar{\outp}}\exp\left( \frac{\rewardfn{\bar{\outp}}}{\gamma} \right)\right)}
\end{equation}
The above eqn. for PPO optimal policy is as shown in equation(4) in \citet{rafailov2023direct}. Note that the reference policy in DPO is same as the prior policy in our formulation.

\end{restatable}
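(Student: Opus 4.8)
The plan is to combine the Bayesian reformulation \eqref{eq:posterior} with the Bradley--Terry identity \eqref{eq:btm_equate}. First I would observe that \eqref{eq:btm_equate} pins down the goodness likelihood $\rewardprob{\outp}$ only up to a factor that does not depend on $\outp$: fixing an arbitrary reference output $\outp_0 \in \yset$, \eqref{eq:btm_equate} gives $\log \rewardprob{\outp} - \log \rewardprob{\outp_0} = (\rewardfn{\outp} - \rewardfn{\outp_0})/\gamma$, which rearranges to $\rewardprob{\outp} = c(\inp)\exp(\rewardfn{\outp}/\gamma)$ with $c(\inp) = \rewardprob{\outp_0}\exp(-\rewardfn{\outp_0}/\gamma)$ independent of $\outp$. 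Thus, as a function of $\outp$, the likelihood is proportional to $\exp(\rewardfn{\outp}/\gamma)$.

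Next I would substitute this expression into \eqref{eq:posterior}. By the law of total probability over $\yset$, the denominator is $\rewardprobx = \sum_{\boutp \in \yset}\prior{\boutp}\rewardprob{\boutp} = c(\inp)\sum_{\boutp \in \yset}\prior{\boutp}\exp(\rewardfn{\boutp}/\gamma)$, which is finite and strictly positive under the stated support assumptions. Plugging the numerator and denominator into \eqref{eq:posterior} and cancelling the common factor $c(\inp)$ yields
\begin{equation*}
\posterior{\outp} = \frac{\prior{\outp}\,\rewardprob{\outp}}{\rewardprobx} = \frac{\prior{\outp}\exp(\rewardfn{\outp}/\gamma)}{\sum_{\boutp \in \yset}\prior{\boutp}\exp(\rewardfn{\boutp}/\gamma)} = \optimal{\outp},
\end{equation*}
which is exactly the claimed expression, with the DPO/PPO reference policy identified with the prior $\prior{\outp}$ and the KL temperature identified with $\gamma$.

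To finish, I would simply recall that the displayed right-hand side is the PPO-optimal policy: it is the standard closed-form maximizer of the KL-regularized reward objective $\expectation_{\outp \sim \policy{\outp}}[\rewardfn{\outp}] - \gamma\,\KL(\policy{\outp}\,\|\,\prior{\outp})$, obtained by a pointwise Gibbs/Lagrangian argument, exactly as in Eq.~(4) of \citet{rafailov2023direct}; since the theorem only asks to match $\posterior{\outp}$ to this expression, this step can be cited rather than re-derived. The only subtlety, and hence the main point requiring care, is the first step: the Bradley--Terry parametrization fixes $\rewardprob{\outp}$ only through its ratios, so one must explicitly carry the $\outp$-independent normalizer $c(\inp)$ through Bayes' rule and check that it cancels, and verify that the support/normalizability conditions make $\rewardprobx$ a well-defined finite constant. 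Everything after that is routine substitution.
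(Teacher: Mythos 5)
Your proof is correct and follows essentially the same route as the paper's: Bayes' rule plus the law of total probability for $\rewardprobx$, with the Bradley--Terry identity \eqref{eq:btm_equate} used to reduce everything to reward differences. The only cosmetic difference is that you make the $\outp$-independent factor $c(\inp)$ explicit and cancel it, whereas the paper achieves the same cancellation by dividing the numerator and denominator by $\rewardprob{\outp}$ and applying \eqref{eq:btm_equate} to the resulting ratios.
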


\subsection{DPO-sft as a special case of \shortname} \label{sec:dpo_sft}
In DPO~\cite{rafailov2023direct}, an ideal setting is discussed where the samples are generated from the base policy ($\prior{\outp}$ in our case) and annotated by humans. Often, the preference pairs in publicly available data are sampled from a different policy than $\prior{\outp}$. 
To address this, \citet{liu2023rso} experiment with a variant of DPO, called \dposft, in which a reward model is first trained on publicly available preference pairs and then used to annotate the samples generated from the base policy $\prior{\outp}$. 


We claim that \shortname\ reduces to \dposft\ when we generate only 2 samples per input $\inp$ and assign the importance weight of the winner to 1.
The last assumption is equivalent to replacing softmax in \cref{eq:imp_weighted_sp} with an argmax.
We formalize it in the following theorem:

\begin{restatable}{theorem}{dpospecialcase}
\label{thm:dpo_special_case}
Let the proposal distribution $\proposal{y}$ in \shortname\ be restricted to the prior $\prior{y}$.
When $n=2$, and the softmax is replaced by argmax in
\cref{eq:imp_weighted_sp}, then the \shortname\ objective reduces to DPO objective proposed by \citeauthor{rafailov2023direct}
\end{restatable}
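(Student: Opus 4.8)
The plan is to start from the \shortname{} objective as characterized in Theorem~\ref{th:unbiased_estimator}, namely the negative self-normalized KL divergence $-\KLnorm(\pposterior \| \ppolicy)$ with $\pproposal = p(\cdot|\inp)$ and $n=2$, and then track what happens to the two pieces — the $\alphahaty{i}$ weights and the $\betahatthetay{i}$ weights — under the stated restrictions. First I would write out the inner term $\KL(\alphahatyx \| \betahatthetayx) = \sum_{i=1}^2 \alphahaty{i}\log(\alphahaty{i}/\betahatthetay{i})$ for a pair $(\outp_1,\outp_2)$ sampled i.i.d.\ from $\prior{\outp}$. Since $\pproposal = \text{prior}$, \cref{eq:imp_weighted_sp} gives $\alphahaty{i}$ as a softmax over $\rewardfn{\outp_i}/\gamma$; replacing softmax by argmax sets $\alphahaty{w} = 1$ and $\alphahaty{\ell} = 0$, where $w$ is the index of the higher-reward (``winner'') output and $\ell$ the ``loser''. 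Likewise, because $\pproposal = \text{prior} = \reference{}$, the baseline weights become $\betathetay{i} = \policy{\outp_i}/\reference{\outp_i}$, so $\betahatthetay{w} = \policy{\outp_w}/\reference{\outp_w} \big/ \big(\policy{\outp_w}/\reference{\outp_w} + \policy{\outp_\ell}/\reference{\outp_\ell}\big) = \sigma\!\left(\log\frac{\policy{\outp_w}}{\reference{\outp_w}} - \log\frac{\policy{\outp_\ell}}{\reference{\outp_\ell}}\right)$.

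Next I would substitute these into the inner KL. With $\alphahaty{w}=1, \alphahaty{\ell}=0$, the sum collapses to the single term $1\cdot\log(1/\betahatthetay{w}) + 0 = -\log\betahatthetay{w} = -\log\sigma\!\left(\log\frac{\policy{\outp_w}}{\reference{\outp_w}} - \log\frac{\policy{\outp_\ell}}{\reference{\outp_\ell}}\right)$ (using the convention $0\log 0 = 0$, which I should note). Then the \shortname{} objective, as a function of $\theta$, is
\begin{align}
-\KLnorm(\pposterior \| \ppolicy) = \multiexpectationdpo\!\left[\log\sigma\!\left(\log\frac{\policy{\outp_w}}{\reference{\outp_w}} - \log\frac{\policy{\outp_\ell}}{\reference{\outp_\ell}}\right)\right] + C',
\end{align}
where the expectation is over an unordered pair drawn from the prior and $w,\ell$ are determined by which has higher reward $\rewardfn{\cdot}$ (with $\gamma>0$ this is the same as higher $\rewardfn{\cdot}/\gamma$). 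This is exactly (the population version of) the DPO loss of \citet{rafailov2023direct} with reference policy $\reference{} = \prior{}$: a pair is sampled, the reward model designates the preferred response, and one maximizes $\log\sigma$ of the difference of implicit-reward log-ratios. I would add a sentence noting that any $\theta$-independent additive constant and the $\beta$/$\gamma$ temperature scaling conventions line up, so the gradients — and hence the optimization problems — coincide.

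The main obstacle, and the place I would be most careful, is the argmax substitution: it is a discontinuous modification of the weights, so one cannot literally appeal to Theorem~\ref{th:unbiased_estimator} verbatim (that theorem is about the softmax/true $\alphahaty{i}$). I would handle this by treating the argmax as \emph{defining} a new target — effectively a degenerate goodness model where $\rewardprob{\outp}$ concentrates on the argmax within each sampled pair — and then re-deriving the objective directly from \cref{eq:grad_brain}: the gradient estimate becomes $\sum_i(\alphahaty{i}-\betahatthetay{i})\nabla_\theta\log\policy{\outp_i}$ with $\alphahaty{}$ now a hard one-hot, and I would check by a short direct computation that this equals $\nabla_\theta\log\sigma(\cdots)$ for the pair, recovering the DPO gradient. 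A secondary fussy point is the measure-zero event $\rewardfn{\outp_1}=\rewardfn{\outp_2}$ (ties), and the degenerate cases where $\outp_1=\outp_2$; I would dismiss ties as a null set under any non-atomic reward and note the objective is $0$ on coincident pairs, so neither affects the expectation. Finally I should state explicitly that the equivalence is at the level of the expected objective (and its gradient), matching how DPO is itself defined as an expectation over the preference distribution.
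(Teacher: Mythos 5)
Your proposal is correct and follows essentially the same route as the paper's proof: substitute $n=2$ and $\pproposal=\prior{\outp}$ into the self-normalized KL, use the argmax to make $\alphahatyx$ one-hot so the inner sum collapses to $-\log\betahatthetay{w}$, and recognize $\betahatthetay{w}$ as $\sigma\bigl(\log\frac{\policy{\outp_w}}{\prior{\outp_w}}-\log\frac{\policy{\outp_\ell}}{\prior{\outp_\ell}}\bigr)$, recovering DPO's equation (7). Your added caveats about the $0\log 0$ convention, ties, and the discontinuity of the argmax substitution are sensible but not part of (nor needed beyond) the paper's argument.
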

\begin{proof}
We start with \shortname\ objective defined in \cref{eq:snkl} and insert our assumptions to arrive at DPO objective.

By substituting $n=2$ and $\proposal{\outp}=\prior{\outp}$ in R.H.S. of \cref{eq:snkl}, and using \cref{eq:alphabetakl}, we get:
\begin{equation}\label{eq:dpo_sft_iw}
\multiexpectationdpo \left[ \alphahaty{1}\log \frac{\alphahaty{1}}{\betahatthetay{1}} + \alphahaty{2}\log \frac{\alphahaty{2}}{\betahatthetay{2}} \right]
\end{equation}
Now, without loss of generality, assume that $\rewardfn{\outp_1} > \rewardfn{\outp_2}$. Replacing softmax with argmax in \cref{eq:imp_weighted_sp}, we get $\alphahaty{1}=1$ and $\alphahaty{2}=0$.
Plug it in \cref{eq:dpo_sft_iw} to get:
\begin{equation*} - \multiexpectationdpo \log \betahatthetay{1}
\end{equation*}
Now recall from \cref{eq:beta} that $\betahatthetay{1} = \betathetay{1}/(\betathetay{1}+\betathetay{2})$ and $\betathetay{i}=\frac{\policy{\outp_i}}{\prior{\outp_i}}$. Replacing it in the above equation and rearranging the terms, we get:
\begin{equation*}
- \multiexpectationdpo \log \sigma \left(\log \frac{\policy{\outp_1}}{\prior{\outp_1}} - \log \frac{\policy{\outp_2}}{\prior{\outp_2}} \right)
\end{equation*}

The above expression is exactly same as equation (7) in \citeauthor{rafailov2023direct}.

\end{proof}

We also note that to extend DPO to multiple outputs, \ie, $n>2$, \citeauthor{rafailov2023direct} resorts to a more general Plackett-Luce preference model. 

\section{Experimental Setup}

\textbf{Tasks:}
We conduct experiments on two natural language generation tasks, \viz, summarization, and multi-turn dialog.
In the summarization task, we align CarperAI's summarization LM to a Bradley-Terry reward model. 
We train and evaluate using Reddit TL;DR dataset,
in which input is a post on Reddit, and the aligned model should generate a high reward summary.\\
In the multi-turn dialog task, we ensure that a open-ended chatbot's responses are \textbf{H}elpful and \textbf{H}armless. 
We use QLoRA-tuned Llama-7b \cite{dettmers2023qlora} as SFT model, 
and a fine-tuned GPT-J \cite{gpt-j} as the reward model. 
We train and evaluate using a subset of AntrophicHH \cite{bai2022training} dataset.
See \cref{sec:modeldatasetlinks} for hyperlinks to all the datasets and models described above.

\textbf{Evaluation metrics:} We use win--rate over gold responses and direct win--rate against the baseline responses to measure the performance of various techniques.
Win--rate is defined as the fraction of test samples on which the generated response gets a higher reward than the gold response.
We use two independent reward functions to compute the win--rate: (1) \textit{Train RM}, which is the reward function used to align the SFT model, and 
(2) \textit{LLM eval}, which follows LLM-as-a-judge \cite{zheng2023judging} and prompts a strong instruction following LLM, Mixtral-8x7B \cite{jiang2024mixtral}, to compare the two outputs and declare a winner.  The first metric captures the effectiveness of the alignment method in maximizing the specified reward, while a high win--rate using the LLM prompting ensures that the alignment method is not resorting to reward-hacking \cite{skalse2022defining}.

\begin{table}[]
\centering
\caption{Win--rate (in \%age) $\pm$ 95\% confidence--interval against the gold output on the test sets. Train RM corresponds to the reward model used during training, whereas for LLM eval., we prompt Mixtral-8x7B.}
\label{tab:perform_comparison}
\resizebox{\columnwidth}{!}{
\begin{tabular}{@{}lrrrr@{}}
\toprule
\multicolumn{5}{c}{\textbf{AnthropicHH}}                                 \\ \midrule
     & \textbf{DPO} & \textbf{DPO-sft} & \textbf{RSO} & \textbf{\shortname} \\ \midrule
\textbf{Train RM} & 54.60$\pm$1.37 & 87.37$\pm$0.91 & 84.59$\pm$0.99 & 95.40$\pm$0.57 \\
\textbf{LLM eval} & 67.02$\pm$1.29 & 66.68$\pm$1.29 & 67.22$\pm$1.29 & 74.36$\pm$1.20 \\ \midrule
\multicolumn{5}{c}{\textbf{Reddit TL;DR}}                                           \\ \midrule
\textbf{Train RM} & 86.72$\pm$0.82 & 90.86$\pm$0.70 & 91.24$\pm$0.68 & 95.21$\pm$0.52 \\
\textbf{LLM eval} & 60.26$\pm$1.18 & 60.55$\pm$1.18 & 60.41$\pm$1.18 & 64.74$\pm$1.16 \\ \bottomrule
\end{tabular}
}
\end{table}

We prompt GPT-4 \cite{openai2023gpt4} to directly compare \shortname's responses with each of the baselines separately. 
See \cref{sec:prompt-for-llm-as-judge} for the specific instructions used for prompting GPT-4 and Mixtral-8x7B.


\textbf{Training details:}
While DPO uses human-annotated data directly, we generate $n=32$ samples per input prompt for the other models (\dposft, RSO, and \shortname). The samples are organized in $16$ pairs for \dposft\ and RSO, while for \shortname, the $32$ samples are grouped together. We use $\gamma=1$ for \shortname~ in all our experiments, unless otherwise stated, and average the log-probability of the tokens.
We train each model for a total of $10,000$ steps ($e=40$ and $k=250$ in \cref{alg:brain}) with 4 prompts and $32$ outputs per prompt in a batch $B$. 
The model is evaluated after every $1,000$ steps and the best model is selected based on the win rate against the gold response on the cross-validation set. Other training details are given in Appendix~\ref{sec:other_train_details}.

\section{Experimental Results}

\textbf{Research Questions:} Our experiments aim to assess \shortname's performance in aligning an SFT model to a given reward function and compare it against various baselines. Specifically, we answer the following research questions:
\begin{compactenum}
    \item How does \shortname~compare to existing baselines?
    \item How does the KL-reward frontier of \shortname\ compare with that of DPO?
    \item What is the role of self-normalized baseline subtraction in \shortname?
    \item How to bridge the gap between DPO and \shortname?
    \item How does varying the number of output samples per input affect \shortname's performance?
\end{compactenum}

\subsection{Comparison with baselines}

\Cref{tab:perform_comparison} compares the win--rate of \shortname\ against our baselines -- RSO, DPO, and DPO-sft. We observe that \shortname\ consistently outperforms all the baselines, irrespective of the model used to compute win--rates.
Specifically, when measured using the specified reward model, \shortname\ achieves 8 and 4 pts better win--rate than the strongest baseline on AntrophicHH and TL;DR, respectively.
Next, we observe that even though \dposft\ has a much better win--rate than DPO when computed using the specified reward function, their performance is the same when judged by an independent LLM, indicative of reward--hacking.

\begin{table}[]
\centering
\footnotesize
\caption{Head-to-head comparison of the \shortname\ against the baselines using GPT-4.}
\label{tab:headon_comparison}
\resizebox{\columnwidth}{!}{
\begin{tabular}{@{}r|rrr|rrr@{}}
\multicolumn{1}{l|}{} & \multicolumn{3}{c|}{\textbf{AnthropicHH}} & \multicolumn{3}{c}{\textbf{Reddit TL;DR}} \\ \cmidrule(l){2-7} 
\multicolumn{1}{l|}{} & \textbf{Win \%} & \textbf{Tie \%} & \textbf{Loss \%} & \textbf{Win \%} & \textbf{Tie \%} & \textbf{Loss \%} \\ \midrule
\textbf{vs DPO} & 44.0 & 31.6 & 21.4 & 42.1 & 19.6 & 38.3 \\
\textbf{vs DPO-sft} & 45.4 & 34.4 & 20.2 & 45.2 & 18.9 & 35.9 \\
\textbf{vs RSO} & 44.2 & 35.5 & 20.3 & 45.1 & 17.6 & 37.3
\end{tabular}
}
\end{table}

In \Cref{tab:headon_comparison}, we summarize the results of head-to-head comparisons of \shortname\ with each of the baselines judged by GPT-4 on a set of 500 examples from each of the dataset. \textit{Win \%} indicates the percentage of examples for which \shortname\ was declared the winner compared to the baseline. We observe that \shortname\ wins twice as many times as the baselines on AnthropicHH.

\subsection{KL-reward frontier}
Next, we compare the KL-reward frontier of \shortname\ with that of DPO-sft by varying the value of $\gamma$ ($\beta$ in the case of DPO-sft) and selecting multiple checkpoints for each $\beta/\gamma$. For each checkpoint, we sample $1000$ prompts and generate $8$ outputs per prompt. For each output, we compute the reward $\rewardfn{\outp}$ and $\log \policy{\outp} - \log \prior{\outp}$ and average them over the outputs and the prompts. This gives us $(KL, reward)$ for each checkpoint. We plot these values in Figure~\ref{fig:kl_reward}.

\begin{figure}[t]
    \centering
    \includegraphics[width=\columnwidth]{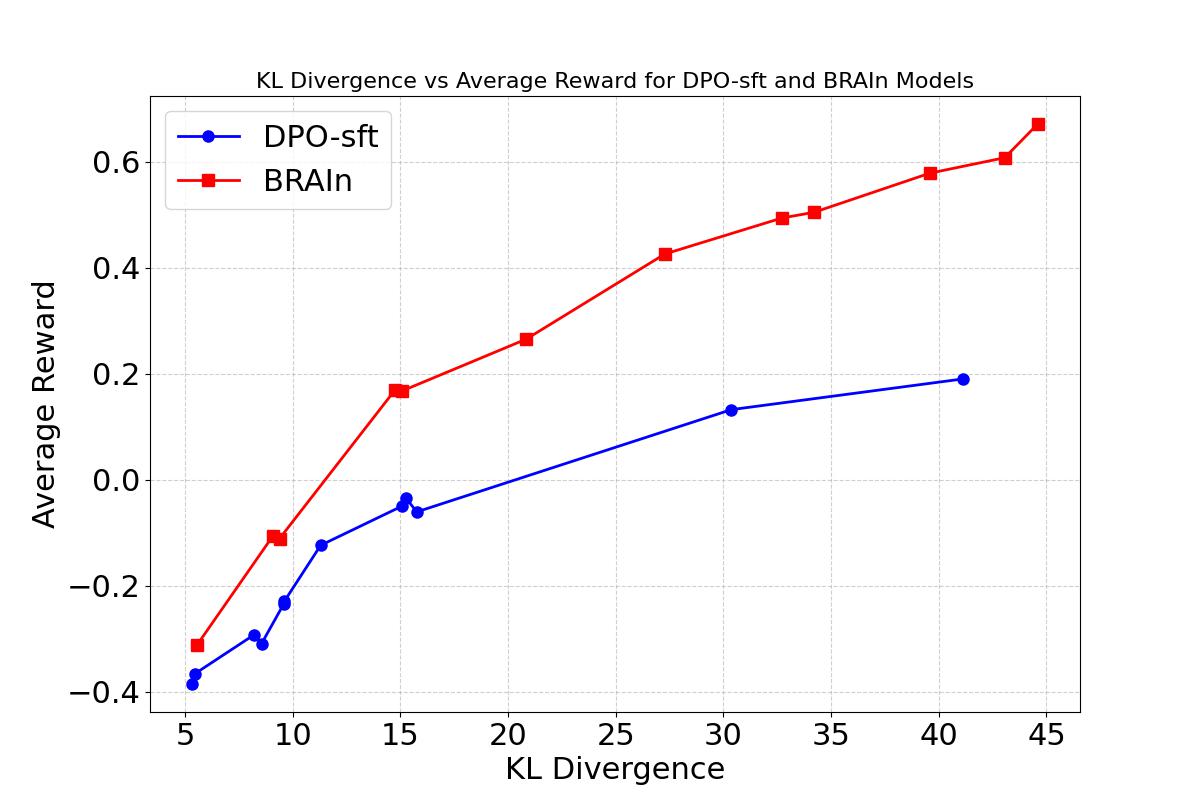}
    \caption{The KL-reward frontier of \shortname\ and DPO-sft}
    \label{fig:kl_reward}
\end{figure} 

As can be observed from the plot, for the same KL divergence, \shortname\ achieves a much higher reward than DPO-sft. Moreover, DPO-sft fails to achieve a reward as high as \shortname\ no matter how much the beta value is decreased.

\subsection{Role played by self-normalized baseline}
Next, we demonstrate the crucial role that the self-normalized baseline plays in \shortname. 

Our first experiment is a toy experiment where the posterior/target is defined to be the 1-D standard Gaussian distribution $\mathcal{N}(0, 1)$. The training policy $q_\theta$ is also Gaussian $\mathcal{N}(1, 1)$. We generate samples from the proposal distribution which is also chosen to be Gaussian $\mathcal{N}(\theta, 1)$ where $\theta$ is varied from $0$ to $1$. For each value of $\theta$, we generate $8$ samples from the proposal distribution to compute the GDC, GDC++ \cite{korbak2022distributionmatching, khalifa2020gdc}, and \shortname\ gradient estimate. We repeat this experiment $2000$ times and compute the variance across the different sets. The results are plotted in Figure~\ref{fig:variance_grad}.

\begin{figure}[t]
    \centering
    \includegraphics[width=\columnwidth]{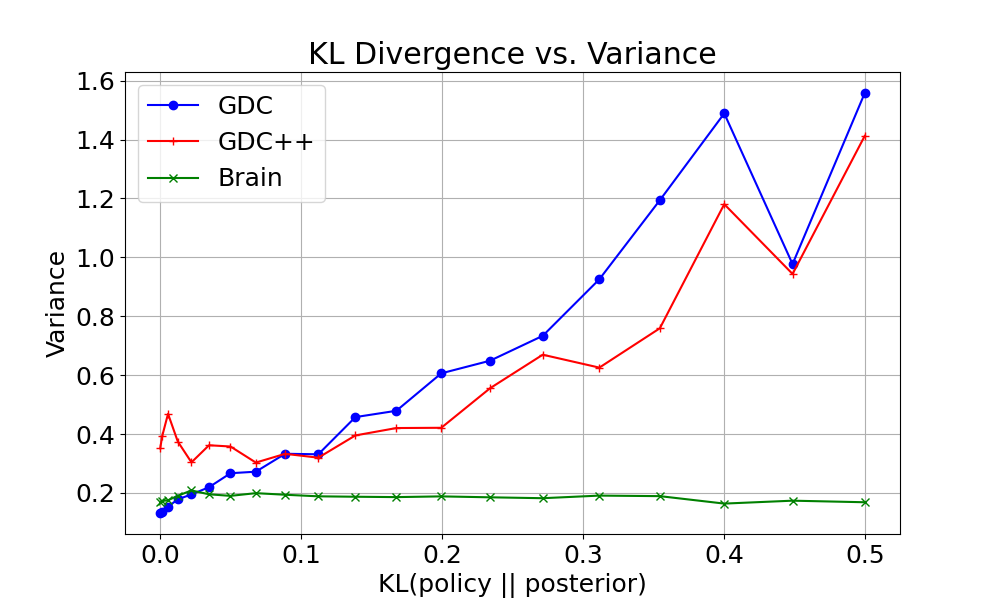}
    \caption{Variance of GDC, GDC++ and \shortname\ gradient estimates}
    \label{fig:variance_grad}
\end{figure} 

\begin{table}[h!] 
\centering
\begin{adjustbox}{max width=\columnwidth}
\begin{tabular}{lccc}
\toprule
\textbf{} & \textbf{BRAIn} & \textbf{ w/o self-norm} & \textbf{w/o baseline} \\
\midrule
\textbf{TL;DR} & 95.2 & 61.4 & 61.1 \\
\textbf{AnthropicHH} & 95.4 & 59.1 & 58.3 \\
\bottomrule
\end{tabular}
\end{adjustbox}
\caption{Effect of self-normalized baseline on the performance of various models.}
\label{tab:self_norm}
\end{table}

Next, we demonstrate the effect of self-normalized baseline on the performance of \shortname\ on TL;DR and Anthropic datasets. Table~\ref{tab:self_norm} summarizes our observations about the role that self-normalization of baseline plays in performance. As can be observed, there is a drastic reduction in performance if self-normalization in the baseline is removed.

\subsection{Bridging the gap between \dposft\ and \shortname}
\label{ssec:bridging-dposft-gap}
\begin{table}[]
\centering
\caption{Win--rates (in \%age) obtained by incrementally augmenting \dposft; \dposft+IW: replace argmax by softmax; \dposft+IW+n: take softmax over $n=32$ outputs, instead of two outputs in 16 pairs}
\label{tab:bridging_gap}
\resizebox{\columnwidth}{!}{
\begin{tabular}{@{}lrrrr@{}}
\toprule
 & \textbf{DPO-sft} & \textbf{\begin{tabular}[c]{@{}r@{}}DPO-sft \\ +IW\end{tabular}} & \textbf{\begin{tabular}[c]{@{}r@{}}DPO-sft\\ +IW+n\end{tabular}} & \textbf{\shortname} \\ \midrule
\textbf{Train RM} & 87.37$\pm$0.91 & 89.21$\pm$0.85 & 93.30$\pm$0.69 & 95.40$\pm$0.57 \\
\textbf{LLM eval} & 66.78$\pm$1.29 & 69.28$\pm$1.27 & 73.89$\pm$1.21 & 74.36$\pm$1.17 \\ \bottomrule
\end{tabular}
}
\end{table}

In \cref{sec:dpo_sft}, we show that under certain restrictions, \shortname\ objective reduces to \dposft\ objective. In this section, we start with \dposft\ and relax these restrictions one at a time to demonstrate the impact of each restriction.
First, we get rid of argmax and reintroduce softmax over rewards (\cref{eq:imp_weighted_sp}) to compute self--normalized importance weights $\alphahaty{i}$. This corresponds to the objective in \cref{eq:dpo_sft_iw}. We call this \dposft+IW.
Next, we relax the assumption of $n=2$, and take softmax over $n=32$ outputs instead of softmax over two samples in 16 pairs. We call it \dposft+IW+n. 
Finally, we relax the restriction of proposal distribution to be the prior distribution only and instead update our proposal periodically (\cref{alg:brain}) to arrive at \shortname.

\Cref{tab:bridging_gap} compares the win--rate of the two intermediate models (\dposft+IW, and \dposft+IW+n) with \dposft\ and \shortname\ over the AntropicHH dataset.
We first observe that relaxing each restriction consistently improves both the win--rates.
The biggest gain ($\sim4$ pts) comes by relaxing the assumption of $n=2$ and taking a softmax over all 32 outputs. This is expected as information contained in simultaneously comparing 32 outputs is potentially more than only 16 pairs.


\subsection{Effect of the number of output samples}
Next, we study the effect of varying the number of output samples ($n$) per input prompt $\inp$ on the performance of \shortname.
We retrain \dposft\ and \shortname\ on AntropicHH dataset for each $n \in \{2,4,8,16,32\}$. As done earlier, we create $n/2$ pairs from the $n$ samples while training \dposft.
The win--rates computed by specified reward model (Train RM) for each $n$ are plotted in \cref{fig:num_outputs}. 
We observe that including more samples in \shortname\ objective leads to improvement in performance till $n=8$ after which it saturates, whereas the performance of \dposft improves monotonically, albeit slowly.  
\begin{figure}[]
\centering
\begin{tikzpicture} \label{fig:num_outputs}
\begin{axis}[
    width=0.5\textwidth, 
    height=0.31\textwidth, 
    xlabel={Number of samples ($n$) per prompt},
    ylabel={Win-rate Against Gold (\%)},
    xmin=2, xmax=32,
    ymin=82, ymax=96,
    xmode=log, 
    log basis x={2}, 
    ytick={84, 86, 88, 90, 92, 94, 96},
    legend style={at={(0.65,0.7)}, anchor=west},
    ymajorgrids=true,
    grid style=dashed,
]

\addplot[
    color=blue,
    mark=square,
    ]
    coordinates {
    (2,89.21)(4,93.16)(8,95.53)(16,95.41)(32,95.42)
    };
\addplot[
    color=red,
    mark=square,
    ]
    coordinates {
    (2,83.57)(4,85.76)(8,86.43)(16,87.00)(32,88.00)
    };
    \legend{\shortname, \dposft}

\end{axis}
\end{tikzpicture}
\caption{Plot of Win-rate Against Gold as a function of the Number of Samples per Prompt.}
\label{fig:winrateplot}
\end{figure}
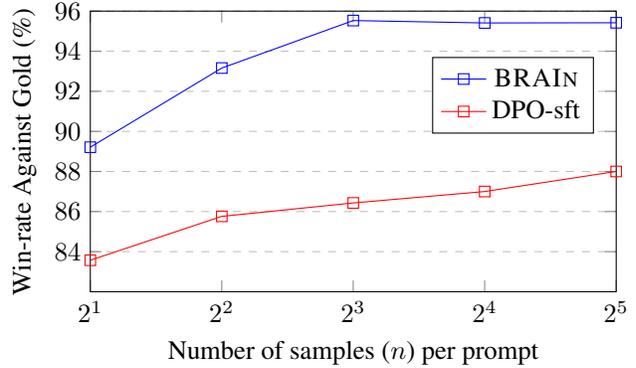


\section{Conclusion}


In this paper, we propose an LLM alignment algorithm called \shortname: \longname. The primary novelty in \shortname\ is the inclusion of a self-normalized baseline in the gradient estimate of distribution matching objectives for RL, that we refer to as \shortname\ gradient estimate.
Theoretically, the self-normalized baseline helps us to establish a connection between distribution matching methods and DPO, showing that DPO (DPO-sft, to be specific) is a special case of \shortname. 
We further establish that the \shortname\ gradient estimate is the gradient of a self-normalized version of KL divergence whose properties we intend to explore in future work. Additionally, we generalize the target distribution in PPO/DPO using Bayes' rule. The experimental results demonstrate the superiority of \shortname\ over other RLHF methods.
\section*{Impact Statement}
This paper presents work whose goal is to advance the field of Machine Learning. There are many potential societal consequences of our work, none which we feel must be specifically highlighted here.


\bibliography{icml2024/icml2024}
\bibliographystyle{icml2024}

\newpage
\appendix
\onecolumn
\section{Proof of Theorems}
\label{sec:proofs}

For the sake of completeness, we restate the definitions and the theorems here.

\selfkl*
\unbiasedestimator*

\begin{proof}
For unbiasedness of the \shortname\ gradient estimator, we need to show the following
\begin{equation}
        \multiexpectation \sum_{i=1}^n \left(\alphahaty{i} - \betahatthetay{i}\right) \nabla_\theta \log \policy{\outp_i} = -\nabla_\theta \KLnorm(\posterior{\outp} || \policy{\outp}) \,,
\end{equation}
where $\outpvec = (\outp_1, \ldots, \outp_n)$ is a sequence of $n$ outputs, the values of $\alpha$ and $\beta$ are as defined in \eqref{eq:alpha} and \eqref{eq:beta} respectively. The superscripts in $\alpha$ and $\beta$ denote their explicit dependence on the output sequence.

To prove the above, we expand each negative KL--divergence term in the self-normalized KL--divergence in terms of the entropy of normalized $\alphay{i}$ and the cross-entropy between self-normalized $\alphay{i}$ and $\betathetay{i}$.
\begin{equation}
    -\KLnorm = \multiexpectation \left[\entropy\left(\alphahaty{i}\right) + \sum_{i=1}^n \alphahaty{i} \log \betahatthetay{i} \right]
\end{equation}
Since the values $\alpha$ don't depend on $\theta$, they are discarded during the gradient computation. Hence, the gradient can be written as
\begin{align}
    -\nabla_\theta \KLnorm = \multiexpectation \sum_{i=1}^n \alphahaty{i} \nabla_\theta \left[\log \betahatthetay{i}\right]
\end{align}
Noting that $\betahatthetay{i} = \frac{\betathetay{i}}{\sum_{j=1}^n \betathetay{j}}$, we split the logarithm and compute the gradient of each term separately.
\begin{align}
    -\nabla_\theta \KLnorm &= \multiexpectation \sum_{i=1}^n \alphahaty{i}  \left[\nabla_\theta \log \betathetay{i} -\nabla_\theta \log \sum_{j=1}^n\betathetay{j}\right] \\
    &= \multiexpectation \sum_{i=1}^n \alphahaty{i}  \left[\nabla_\theta \log \betathetay{i} - \frac{\sum_{j=1}^n \nabla_\theta \betathetay{j}}{\sum_{j=1}^n \betathetay{j}}\right] \label{eq:log_split}
\end{align}
Next, we note that $\nabla_\theta \betathetay{i} = \betathetay{i} \nabla_\theta \log \betathetay{i}$ and
\begin{equation}
     \nabla_\theta \log \betathetay{i} = \nabla_\theta \left[\log \policy{\outp_i} - \log \proposal{\outp_i} \right] = \betathetay{i} \nabla_\theta \log \policy{\outp_i}
\end{equation}
Replacing these results back in equation \eqref{eq:log_split}, we get the desired result.
\begin{align}
    -\nabla_\theta \KLnorm &= \multiexpectation \sum_{i=1}^n \alphahaty{i}  \left[\nabla_\theta \log \policy{\outp_i} - \frac{\sum_{j=1}^n  \betathetay{j} \nabla_\theta \log \policy{\outp_j}}{\sum_{j=1}^n \betathetay{j}}\right] \\
    &= \multiexpectation \left[\frac{\sum_{i=1}^n \alphay{i} \nabla_\theta \log \policy{\outp_i}}  {\sum_{j=1}^n\alphay{j}}  - \frac{\sum_{i=1}^n  \betathetay{i} \nabla_\theta \log \policy{\outp_i}}{\sum_{j=1}^n \betathetay{j}}\right]  \\
    &= \multiexpectation \sum_{i=1}^n \left[\alphahaty{i} - \betahatthetay{i}\right] \nabla_\theta \log \policy{\outp_i}
\end{align}

\end{proof}

\sameminima*
\begin{proof}
    First, we will prove that $\KL = 0 \implies \KLnorm =0$ which is its minimum value. 
    We note that the self-normalized KL--divergence is the weighted sum of KL--divergence between the normalized values of $\alpha$ and $\beta$. Hence, from the property of KL--divergence, it can't be negative, that is, $\KLnorm \geq 0$.
    Now, lets assume that $\KL(\posterior{\outp}||\policy{\outp}) = 0$. By the property of KL--divergence, this implies that $\posterior{\outp} = \policy{\outp} \forall \outp \in \yset$. Thus:
    \begin{equation}
            \betathetay{i} = \frac{\policy{\outp_i}}{\proposal{\outp_i}} = \frac{\posterior{\outp_i}}{\proposal{\outp_i}} = \alphay{i}
    \end{equation}
    Incorporating it in the definition of self-normalized KL divergence in \eqref{eq:snkl}, we get:
    \begin{align} \label{eq:reverse_dir}
    \KLnorm & \left(\posterior{\outp} || \policy{\outp}\right) = \expectation_{\outppvec \sim \proposal{\outp}} \left[ \KL\left( \alphahaty{i} \left|\right| \alphahaty{i} \right) \right]  = 0
    \end{align}

    Instead of proving the other direction, we provide a constructive proof of its contrapositive, that is, 
    $\KL \neq 0 \implies \KLnorm \neq 0$. To see this, we note that $\KL \neq 0$ implies the existence of at least one output, say $\outp_+$, where the posterior and policy disagree. Without loss of generality, lets assume that $\posterior{\outp_+} > \policy{\outp_+}$. Since the probabilities must up to 1, there must exist at least one output, say $\outp_-$,  for which $\posterior{\outp_-} < \policy{\outp_-}$.

    Since self-normalized KL-divergence $\KLnorm$ has a KL-divergence $\KL$ term for every sequence of length $n$, we construct a sequence $\outpvec = (\outp_+, \outp_-, \ldots, \outp_-)$. 
    For such a sequence, all the values of $\alpha$ except the first one are equal. 
    We note that the importance weight of the first output, that is 
    $\alphay{+} = \frac{\posterior{\outp_+}}{\proposal{\outp_+}} > \frac{\policy{\outp_+}}{\proposal{\outp_+}} = \betathetay{+}$. 
    Similarly, the second importance weight $\alphay{-} = \frac{\posterior{\outp_-}}{\proposal{\outp_-}} < \frac{\policy{\outp_-}}{\proposal{\outp_-}} = \betathetay{-}$. Combining these two results we get $\frac{\alphay{-}}{\betathetay{-}} < 1 < \frac{\alphay{+}}{\betathetay{+}}$. This can further be written as $\frac{\alphay{-}}{\alphay{+}} < \frac{\betathetay{-}}{\betathetay{+}}$.
    
    Plugging in this result, the normalized values of $\alpha$ for the sequence are given by:
    \begin{equation}
        \frac{\alphay{1}}{\sum_{j=1}^n \alphay{j}} = \frac{\alphay{1}}{\alphay{+} + (n-1)\alphay{-}} = \frac{1}{1 + (n-1)\frac{\alphay{-}}{\alphay{+}}} > \frac{1}{1 + (n-1)\frac{\betathetay{-}}{\betathetay{+}}} = \frac{\betathetay{+}}{\betathetay{+} + (n-1)\betathetay{-}} = \frac{\betathetay{1}}{\sum_{j=1}^n \betathetay{j}}
    \end{equation}
    Thus the KL--divergence term for this particular sequence, that is $\KL\left( \alphahaty{i} || \betahatthetay{i}\right)$, is strictly greater than $0$. Since the support of proposal distribution includes the support of posterior and trainable policy, we get
    \begin{equation}
        \KLnorm(\posterior{\outp}||\policy{\outp}) = \expectation_{\outppvec \sim \proposal{\outp}} \left[ \KL\left( \alphahaty{i} || \betahatthetay{i} \right) \right]  > 0
    \end{equation}    
Together with equation~\eqref{eq:reverse_dir}, this proves that $\KL = 0 \implies \KLnorm =0$
\end{proof}

\bradleyterry*
\begin{proof}
The proof follows from the application of Bayes' rule and the parameterization of Bradley-Terry model given in \eqref{eq:btm_equate}.
\begin{align}\label{eq:alpha_base}
    \alphahaty{i} &= \frac{\posterior{\outp_i}/\proposal{\outp_i}}{\sum_{j=1}^n \posterior{\outp_j}/\proposal{\outp_j}}\\
    &= \frac{\frac{\prior{\outp_i}}{\proposal{\outp_i}}\times \frac{\rewardprob{\outp_i}}{\rewardprobx}}{\sum_{j=1}^n \frac{\prior{\outp_j}}{\proposal{\outp_j}}\times \frac{\rewardprob{\outp_j}}{\rewardprobx}} \label{eq:apply_bayes}\\
    &= \frac{\frac{\prior{\outp_i}}{\proposal{\outp_i}}\times \rewardprob{\outp_i}}{\sum_{j=1}^n \frac{\prior{\outp_j}}{\proposal{\outp_j}}\times \rewardprob{\outp_j}}\\
    &=  \frac{\frac{\prior{\outp_i}}{\proposal{\outp_i}}}{\sum_{j=1}^n \frac{\prior{\outp_j}}{\proposal{\outp_j}}\times \frac{\rewardprob{\outp_j}}{\rewardprob{\outp_i}}} \\
    &= \frac{\frac{\prior{\outp_i}}{\proposal{\outp_i}}}{\sum_{j=1}^n \frac{\prior{\outp_j}}{\proposal{\outp_j}}\times \exp{\left(\frac{ \rewardfn{\outp_j} - \rewardfn{\outp_i} }{\gamma}\right)}}\label{eq:apply_btm}\\
    &=  \frac{\exp(\frac{\rewardfn{\outp_i}}{\gamma} + \log \prior{\outp_i} - \log \proposal{\outp_i})}{\sum_{j=1}^n \exp( \frac{\rewardfn{\outp_j}}{\gamma} + \log \prior{\outp_j} - \log \proposal{\outp_j})} 
\end{align}
Here \eqref{eq:apply_bayes} follows by applying Bayes rule (see \eqref{eq:posterior}) while \eqref{eq:apply_btm} follows from the Bradley-Terry formulation in \eqref{eq:btm_equate}.
If we set $\proposal{\outp} = \prior{\outp}$, we get a softmax over the rewards as desired.
\end{proof}

\optimalpolicy*
\begin{proof}
We start with the definition of posterior in \cref{eq:posterior}, and use Bradley-Terry assumption to replace $\rewardprob{y}$ with a function of $\rewardfn{y}$.

In RHS of \cref{eq:posterior},  use total probability theorem to replace $\rewardprobx$ with 
$\sum_{\bar{\outp} \in \yset}\prior{\bar{\outp}}\rewardprob{\bar{\outp}}$ to get:
\begin{equation*}
    \posterior{\outp} =  \frac{\prior{\outp} \rewardprob{\outp}}{\sum_{\bar{\outp} \in \yset}\prior{\bar{\outp}}\rewardprob{\bar{\outp}}}
\end{equation*}
Next, move $\prior{\outp} \rewardprob{\outp}$ from the numerator to denominator:
\begin{equation*}
    \posterior{\outp} =  \frac{\prior{\outp}}{\left(\sum_{\bar{\outp} \in \yset}\prior{\bar{\outp}}\frac{\rewardprob{\bar{\outp}}}{ \rewardprob{\outp}}\right)}
\end{equation*}

Now, use \cref{eq:btm_equate} to replace $\frac{\rewardprob{\bar{\outp}}}{\rewardprob{\outp}}$ in the denominator with 
$\exp\left( \frac{\rewardfn{\bar{\outp}} - \rewardfn{\outp}}{\gamma} \right)$:
\begin{equation*}
    \posterior{\outp} =  \frac{\prior{\outp}}{\left(\sum_{\bar{\outp} \in \yset}\prior{\bar{\outp}}\exp\left( \frac{\rewardfn{\bar{\outp}} - \rewardfn{\outp}}{\gamma} \right)\right)}
\end{equation*}
Moving the common term $\exp (-\frac{\rewardfn{y}}{\gamma})$ from the denominator to numerator, we get:
\begin{equation*}
    \posterior{\outp} =  \frac{\prior{\outp}\exp(\frac{\rewardfn{\outp}}{\gamma})}{\left(\sum_{\bar{\outp} \in \yset}\prior{\bar{\outp}}\exp\left( \frac{\rewardfn{\bar{\outp}}}{\gamma} \right)\right)}
\end{equation*}

\end{proof}

\section{Details of Experimental Setup}

\subsection{Base models and datasets}
\label{sec:modeldatasetlinks}
In this section we provide links to all the publically available datasets and models used in our work.

We conduct experiments on two natural language generation tasks, \viz, summarization, and multi-turn dialog.
In the summarization task, we align CarperAI's summarization LM\footnote{ \href{https://huggingface.co/CarperAI/openai_summarize_tldr_sft}{CarperAI/openai\_summarize\_tldr\_sft}} to a Bradley-Terry reward model\footnote{\href{https://huggingface.co/CarperAI/openai_summarize_tldr_rm_checkpoint}{CarperAI/openai\_summarize\_tldr\_rm\_checkpoint}}. The base/SFT model has been trained on post-summary pairs from the train set of Reddit TL;DR dataset \footnote{\href{https://huggingface.co/datasets/CarperAI/openai_summarize_tldr}{datasets/CarperAI/openai\_summarize\_tldr}}. The reward model used for summarization has been trained on human preferences collected by \cite{stiennon2020learning} on outputs generated from a different SFT model where the prompts come from the TL;DR dataset.

In the multi-turn dialog task, we ensure that a open-ended chatbot's responses are \textbf{H}elpful and \textbf{H}armless. We use the Anthropic Helpful \& Harmless dataset~\cite{bai2022training} for RLHF training.
We use QLoRA-tuned Llama-7b\footnote{\href{https://huggingface.co/timdettmers/qlora-hh-rlhf-7b}{timdettmers/qlora-hh-rlhf-7b}} \cite{dettmers2023qlora} as SFT model. It has been trained on the preferred/chosen responses of Anthropic HH dataset.
A fine-tuned GPT-J \footnote{\href{https://huggingface.co/Dahoas/gptj-rm-static}{Dahoas/gptj-rm-static}} 
\cite{gpt-j} trained on a subset of the full Anthropic HH dataset is used as the reward model.
We train and evaluate using a subset\footnote{\href{https://huggingface.co/datasets/Dahoas/rm-static}{datasets/Dahoas/rm-static}} of Antrophic HH \cite{bai2022training} dataset.

\subsection{Other training details}\label{sec:other_train_details}
All the models are trained using the PEFT\footnote{https://huggingface.co/docs/peft/en/index} and Transformers\footnote{https://huggingface.co/docs/transformers/index} library. For Anthropic HH, we use QLoRA~\cite{dettmers2023qlora} for training \shortname and the other baselines. In particular, we use the same QLoRA hyperparameters (rank=$64$, $\alpha=16$) as used for supervised finetuning Llama-7B on Anthropic-HH dataset\footnote{https://huggingface.co/timdettmers/qlora-hh-rlhf-7b}. We use the Adam optimizer\cite{kingma2014adam} with a learning rate of $1e-5$, weight decay of $0.1$ and $\beta_1$ and $\beta_2$ set to $0.9$ and $0.95$ respectively.

For summarization, we use LoRA\cite{hu2021lora} (rank=$8$, $\alpha=32$). The optimizer, learning rate, weight decay and and $\beta$ values are the same as for Anthropic HH.

\subsection{Prompt for LLM-as-judge}
\label{sec:prompt-for-llm-as-judge}
In this section, we describe the prompts provided to the language models Mixtral-8x7B and GPT-4 for the purpose of comparing the gold standard outputs with the outputs generated by these models. Due to budget constraints, we evaluate only $500$ test prompts using GPT-4. The tasks involve acting as an impartial judge in evaluating responses or summaries provided by AI assistants.

\subsubsection{Prompt for Anthropic HH}
The prompt used for the Anthropic HH task is as follows:

"Please act as an impartial judge and evaluate the quality of the responses 
provided by the two AI assistants to the conversation displayed below. Your 
evaluation should consider correctness and helpfulness. You will be given a 
user conversation, assistant A's answer, and assistant B's answer. Your job 
is to evaluate which assistant's answer is better based on the user 
conversation so far. Begin your evaluation by comparing both assistants' 
answers with the user conversation so far. Identify and correct any mistakes. 
Avoid any position biases and ensure that the order in which the responses 
were presented does not influence your decision. Do not allow the length of 
the responses to influence your evaluation. Do not favor certain names of the 
assistants. Be as objective as possible. You should only evaluate the last 
utterance by both the assistants and not the full conversation. After 
providing your explanation, output your final verdict by strictly following 
this format: \"[[A]]\" if assistant A is better, \"[[B]]\" if assistant B is 
better, and \"[[C]]\" for a tie.\\
--------------------------------------------------\\
\{\{Conversation\}\}\\
--------------------------------------------------\\
Assistant B\\
 \{\{AssistantB\}\}\\
--------------------------------------------------\\
Assistant A\\
 \{\{AssistantA\}\}\\
--------------------------------------------------"

\subsubsection{Prompt for Summarization}
The prompt used for the Summarization task is outlined below:

"Please act as an impartial judge and evaluate the quality of the tldrs or 
summaries provided by the two AI assistants to the reddit post displayed 
below. Begin your evaluation by comparing both assistants' summaries with the 
reddit post so far. Do not allow the length of the summaries to influence your 
evaluation. After providing your explanation, output your final verdict by 
strictly following this format: \"[[A]]\" if assistant A is better, \"[[B]]\" 
if assistant B is better, and \"[[C]]\" for a tie.\\
--------------------------------------------------\\
Reddit Post\\
\{\{Conversation\}\} \\
-------------------------------------------------- \\
Assistant B \\
 \{\{AssistantB\}\}\\ 
-------------------------------------------------- \\
Assistant A \\
 \{\{AssistantA\}\}\\ 
--------------------------------------------------"


\end{document}